\newcommand{\marianela}[1]{\textcolor{orange}{#1}}
\newcommand{\fluents}{\ensuremath{{\cal F}}\xspace}
\newcommand{\actions}{\ensuremath{{\cal A}}\xspace}
\newcommand{\init}{\ensuremath{{\cal I}}\xspace}
\newcommand{\goal}{\ensuremath{{\cal G}}\xspace}
\newcommand{\cost}{\ensuremath{c}\xspace}
\newcommand{\task}{\ensuremath{{\cal P}}\xspace}
\newcommand{\plan}{\ensuremath{\pi}\xspace}
\newcommand{\stripstask}{\ensuremath{\task=\langle \fluents, \actions, \init, \goal \rangle}\xspace}
\newcommand{\state}{\ensuremath{s}\xspace}
\newcommand{\action}{\ensuremath{a}\xspace}
\newcommand{\name}{\ensuremath{\textsc{name}}\xspace}
\newcommand{\precondition}{\ensuremath{\textsc{pre}}\xspace}
\newcommand{\addeffects}{\ensuremath{\textsc{add}}\xspace}
\newcommand{\deleffects}{\ensuremath{\textsc{del}}\xspace}
\newcommand{\actionapplication}{\ensuremath{\gamma}\xspace}
\newcommand{\planapplication}{\ensuremath{\Gamma}\xspace}
\newcommand{\atpackage}[3]{\textsf{at}(\mathsf{#1\mbox{-}#2}\mbox{ }\mathsf{#3})}
\newcommand{\at}[2]{\textsf{at}(\mathsf{#1}\mbox{ }\mathsf{#2})}
\newcommand{\drivetruck}[2]{\textsf{drivetruck}\mbox{ }\mathsf{#1}\mbox{ }\mathsf{#2}}
\newcommand{\loadtruck}[2]{\textsf{loadtruck}\mbox{ }\mathsf{#1}\mbox{-}\mathsf{#2}}
\newcommand{\unloadtruck}[2]{\textsf{unloadtruck}\mbox{ }\mathsf{#1}\mbox{-}\mathsf{#2}}
\newtheorem{definition}{Definition}
\newtheorem{proposition}{Proposition}
\newtheorem{remark}{Remark}
\newcommand{\lazytask}{\task_{\mathsf{L}}}
\newcommand{\lazyfluents}{ \fluents_{\mathsf{L}}}
\newcommand{\lazyactions}{\actions_{\mathsf{L}}}
\newcommand{\lazyinit}{\init_{\mathsf{L}}}
\newcommand{\lazygoal}{\goal_{\mathsf{L}}}
\newcommand{\eagertask}{\task_{\mathsf{E}}}
\newcommand{\plandisruption}{\ensuremath{{\cal D}}\xspace}
\title{Planning with Minimal Disruption}
\author{%
Alberto Pozanco,
Marianela Morales,
Daniel Borrajo,
Manuela Veloso}
\begin{document}

\maketitle

\begin{abstract}
In many planning applications, we might be interested in finding plans that minimally modify the initial state to achieve the goals.
We refer to this concept as plan disruption. In this paper, we formally introduce it, and define various planning-based compilations that aim to jointly optimize both the sum of action costs and plan disruption.
Experimental results in different benchmarks show that the reformulated task can be effectively solved in practice to generate plans that balance both objectives.
\end{abstract}

%

\section{Introduction}
Classical planning is the task of finding a plan, which is a sequence of deterministic actions that, when executed from a given initial state, lead to a desired goal state~\cite{DBLP:books/daglib/0014222}.
Each action is associated with a non-negative cost, and the total cost of a plan is defined as the sum of the costs of its actions. Plans with minimal cost are called optimal, and how to efficiently compute them accounts for a large part of automated planning research.
However, the real-world is full of applications where the sum of action costs is only one of the objectives that define the quality of a plan~\cite{sobrinho2001algebra,geisser2022admissible,salzman2023heuristic}.

In this paper, we present an objective that could be significant in various planning applications: the number of modifications required to transform the initial state into the goal state.
We refer to this concept as \emph{plan disruption}, and minimizing it leads to plans that retain as much of the initial state as possible while still achieving the goals.
This property is often desired when computing solutions in related areas such as scheduling and optimization~\cite{sakkout2000probe,mutze2014scheduling}.
Concrete real-world examples include employee scheduling~\cite{clark2011nurse,medard2007airline} or project management~\cite{zhu2005disruption}. 
This concept is closely related to the idea of \emph{avoiding side effects}~\cite{amodei2016concrete}, as it assumes that changes in the state which are not specified in the goal are undesirable and result from underspecification of objectives.

\cite{klassen2022planning} formalized the concept of avoding side effects in planning and introduced a compilation to solve problems by minimizing the number of changes or side effects required to reach the goal state, without considering the quality of the plan used to achieve it.
Also in the field of planning, some studies have explored the related problem of \emph{plan stability}, which focuses on making minimal changes to an existing plan during replanning~\cite{van2005plan,fox2006plan}.
We argue that in certain situations, such as when iteratively solving successive planning tasks within the same environment, minimizing plan disruption could be highly beneficial.
In such scenarios, we may not have an existing plan to adhere to, but instead, we might be interested in computing a plan from scratch while making minimal changes to the initial state, as it includes certain elements that we wish to preserve.

\begin{figure}[t]
    \centering
    \includegraphics[width=0.8\linewidth]{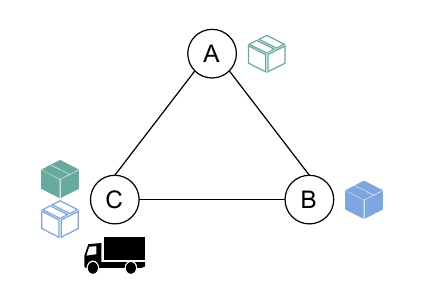}
    \caption{Logistics task where a truck must deliver two packages by moving them from their current (filled) locations to their goal (empty) destinations.}
    \label{fig:logistics}
\end{figure}

Let us consider the logistics task depicted in Figure~\ref{fig:logistics}, where a truck must deliver two packages by moving them from their current (filled) locations to their goal (empty) destinations.
Among the various plans that solve this task, there are two cost-optimal plans (with the minimum number of actions) that we would like to highlight. 
The first plan involves the truck traveling to $\mathsf{B}$, loading the blue package, unloading it at $\mathsf{C}$, then loading the green package, and finally unloading it at $\mathsf{A}$. 
This plan successfully delivers all the packages but leaves the truck at $\mathsf{A}$.
Conversely, there is another cost-optimal plan where the truck first loads the green package, delivers it to $\mathsf{A}$, picks up the blue package at $\mathsf{B}$, and finally delivers it to $\mathsf{C}$. 
This plan also completes the delivery of all packages but leaves the truck at its original location, $\mathsf{C}$, which may be advantageous if $\mathsf{C}$ serves as a depot or a strategic point for the truck's positioning for future planning tasks~\cite{pozanco2024computing}.

In simple examples like this, such constraints can sometimes be encoded as part of the goal state. 
However, this approach is not generally applicable, especially for larger tasks. 
In these cases, it is impossible to determine in advance which aspects of the initial state (such as which trucks) can retain their initial truth values without making the planning task unsolvable. 
A better strategy is to allow the planner to minimize the number of changes performed, ensuring that solutions will always exist.

%

We are interested in jointly optimizing both objectives: sum of action costs and plan disruption.
This contrasts with~\cite{klassen2022planning}, which focuses execlusively on the plan disruption (side effects), disregarding the inherent trade-offs between these two objectives.
There exist three main approaches to solve such multi-objective planning problems in the literature: cost-algebraic A*~\cite{edelkamp2005cost}; multi-objective search algorithms such as NAMOA*~\cite{10.1145/1754399.1754400} or BOA*~\cite{ulloa2020simple,hernandez2023simple}; or reformulate the original planning task so that plans that solve the new task are plans that optimize the two objectives~\cite{katz2022producing,pozanco2024uniform}.
While the first two approaches require developing new heuristics for each metric, reformulating the task allows us to leverage all the power of domain-independent heuristics and planners.
We will reformulate the original task to generate plans that jointly optimize sum of action costs and plan disruption.

The main contributions of this paper are: (1) introduction of a novel bi-objective planning task with many real-world applications: jointly optimizing sum of action costs and plan disruption; and (2) definition of different compilations to produce plans that jointly optimize both objectives.

\section{Background}
We formally define a planning task as follows:
\begin{definition}[Planning task]\label{def:strips-plan-task}
  A Planning task
is a tuple \stripstask, where \fluents is a set of fluents, \actions is a set of
 actions, $\init \subseteq \fluents$ is an initial state, and $\goal\subseteq \fluents$ is a goal specification.  
\end{definition}


A state $\state \subseteq \fluents$ is a set of fluents that are true at a given time.
A state $\state \subseteq \fluents$ is a goal state if and only if $\goal \subseteq \state$.
Each action $\action \in \actions$ is described by its name $\name(\action)$, which is a string; a set of positive and negative preconditions $\precondition^+(\action)$ and $\precondition^-(\action)$, which are set of fluents that need to be true (or false) for the action to be applied; add and delete effects $\addeffects(\action)$ and $\deleffects(\action)$, which are set of fluents that are added (or deleted) once the action is applied; and cost $\cost(\action) \in \mathbb{R}$.
An action \action is applicable in a state \state if and only if $\precondition^+(\action)\subseteq~s$ and $\precondition^-(\action) \cap s = \emptyset$.
We define the result of applying an action in a state as $\actionapplication(\state,\action)=(\state \setminus \deleffects(\action)) \cup \addeffects(\action)$.
We assume $\deleffects(\action) \cap \addeffects(\action) = \emptyset$.
A sequence of actions $\plan=(\action_1,\ldots,\action_n)$ is applicable in a state $\state_0$ if there are states $(\state_1.\ldots,\state_n)$ such that $\action_i$ is applicable in $\state_{i-1}$ and $\state_i=\actionapplication(\state_{i-1},\action_i)$.
The resulting state after applying a sequence of actions is $\planapplication(\state,\pi)=\state_n$, and $\cost(\plan) = \sum_{i}^n \cost(\action_i)$ denotes the cost of $\plan$.
A state $\state$ is reachable from state $\state^\prime$ if and only if there exists an applicable action sequence \plan such that $s \subseteq \planapplication(\state^\prime,\pi)$.
A state \state is a dead-end state if and only if it is not a goal state and no goal state is reachable from \state.
The solution to a planning task $\task$ is a plan, i.e., a sequence of actions $\plan$ such that $\goal \subseteq \planapplication(\init,\pi)$.
A plan with minimal cost is optimal.
\section{Plan Disruption}

In classical planning, a plan \plan solves a planning task if and only if all the goals are true in the final state $\goal \subseteq \planapplication(\init,\plan)$.
This definition overlooks the number of modifications to transform $\init$ into $\goal$.
In this section, we introduce plan disruption, a metric that counts the number of propositions that differ between $\init$ and $\planapplication(\init,\plan)$.
Specifically, the disruption of a plan is calculated as the cardinality of the symmetric difference of the initial and goal state sets. 
The symmetric difference $\triangle$ of two sets is an operation that returns a set that includes elements present in either of the two sets but absent in their intersection.
This definition is similar to the concept of fluent side effect in \cite{klassen2022planning}.

\begin{definition}[Plan Disruption]
    Given a planning task $\stripstask$ and a plan $\plan$ that solves it, the plan disruption of $\plan$ is defined as:
    \begin{equation*}
        \plandisruption(\plan) = |\init \triangle \planapplication(\init,\plan)|
    \end{equation*}
\end{definition}

Let us illustrate Plan Disruption by using the \textsc{logistics} task depicted in Figure~\ref{fig:logistics}.
Table~\ref{tab:logistics} describes the initial state of the task $\init$, together with the two cost-optimal plans outlined in the Introduction.
The first plan $\plan_1$ involves the truck traveling to $\mathsf{B}$, loading the blue package, unloading it at $\mathsf{C}$, then loading the green package, and finally unloading it at $\mathsf{A}$. 
In the second plan $\plan_2$, the truck first loads the green package, delivers it to $\mathsf{A}$, picks up the blue package at $\mathsf{B}$, and finally delivers it to $\mathsf{C}$. 
The third row of Table 1 shows the goal state after executing each respective plan, while the final row presents the plan disruption metric for each plan.
After executing $\plan_1$, none of the fluents that were true in the initial state remain true in the goal state. Instead, three new fluents become true in the goal state that were not true initially, leading to a plan disruption of $6$.
In contrast, plan $\plan_2$ keeps the truck at its original location $C$, altering only the fluents related to the packages' positions. This results in a plan disruption value of $4$.

\begin{table}[t]
\setlength{\tabcolsep}{1.8pt}
\scalebox{0.79}{
    \begin{tabular}{c|>{\centering\arraybackslash}p{0.51\linewidth}|>{\centering\arraybackslash}p{0.51\linewidth}}
          & $\plan_1$ & $\plan_2$  \\
         \hline
          $\init$ & \multicolumn{2}{c}{$\at{green}{C}, \at{blue}{B}, \at{truck}{C}$} \\
         \hline
          $\planapplication(\init,\plan)$ & \scalebox{0.85}{$\at{green}{A}, \at{blue}{C}, \at{truck}{A}$} & \scalebox{0.85}{$\at{green}{A}, \at{blue}{C}, \at{truck}{C}$} \\
         \hline
          \plandisruption(\plan) & $6$ & $4$ \\
          \hline
    \end{tabular}}
    \vspace{1.5mm}
    \caption{Plans $\plan_1$ and $\plan_2$ to reach \goal in the \textsc{logistics} task shown in Figure~\ref{fig:logistics}. Rows in the table define the initial state $\init$, the goal state reached after executing each plan $\Gamma(\init,\plan)$, and the plan disruption $\plandisruption(\plan)$ of each plan.} 
    \label{tab:logistics}
    \vspace{3mm}
\end{table}

We can determine both a lower and an upper bound for the plan disruption of a planning task without actually computing a plan. 
The lower bound is given by the minimum changes required to transition from $\init$ to $\goal$.
On the other hand, the upper bound corresponds to the maximum number of changes that could occur when going from $\init$ to $\goal$. This is given by the total number of fluents $\fluents$ in the planning task, excluding those goal fluents that are already true in the initial state, as they do not need to change.
We provide these bounds to offer theoretical insight into the possible values of plan disruption, and formalize them as follows:

\begin{proposition}
    Given a planning task $\stripstask$, the lower bound $\ell$ and the upper bound $\mathcal{L}$ of the plan disruption $\plandisruption(\plan)$ of any plan $\plan$ that solves $\task$ are defined as: 
    \[ 
    \ell(\plandisruption(\plan))= |\goal \setminus \init|
    \hspace{6mm}
    \mathcal{L}(\plandisruption(\plan)) = |\fluents| - |\goal \cap \init|
    \]
\end{proposition}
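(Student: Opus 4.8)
The plan is to work directly with the final state $s=\planapplication(\init,\plan)$ and to exploit the single structural fact that characterizes a solution: since $\plan$ solves $\task$, we have $\goal\subseteq s$. Writing the symmetric difference as the disjoint union $\init\triangle s=(\init\setminus s)\cup(s\setminus\init)$, the disruption decomposes cleanly as $\plandisruption(\plan)=|\init\setminus s|+|s\setminus\init|$. Both bounds should then follow from elementary inclusions among $\init$, $\goal$, $s$, and $\fluents$, without ever constructing a plan, exactly as the statement promises.

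For the lower bound I would argue that every goal fluent not already true initially is forced to be a change. Concretely, take any $f\in\goal\setminus\init$: then $f\in\goal\subseteq s$ while $f\notin\init$, so $f\in s\setminus\init$. This gives the chain of inclusions $\goal\setminus\init\subseteq s\setminus\init\subseteq\init\triangle s$, and passing to cardinalities yields $\plandisruption(\plan)\ge|s\setminus\init|\ge|\goal\setminus\init|=\ell$. This direction is routine; the only point to record is that the two parts of the symmetric difference are genuinely disjoint, which is immediate from their definitions.

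For the upper bound I would count from the opposite side, bounding the fluents that are guaranteed to stay fixed rather than those that change. Since $\init\triangle s\subseteq\fluents$ we already have $\plandisruption(\plan)\le|\fluents|$, and any sharper statement must come from exhibiting fluents that provably cannot lie in $\init\triangle s$. The natural candidates are the fluents true both initially and finally: because $\goal\subseteq s$, every fluent in $\init\cap\goal$ is true in $\init$ and in $s$, hence is unchanged and excluded from the symmetric difference, giving $\plandisruption(\plan)\le|\fluents|-|\init\cap\goal|$.

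I expect the main obstacle to be matching this excluded quantity to the $\ell=|\goal\setminus\init|$ that is subtracted in the statement. The fluents of $\goal\setminus\init$ are themselves forced changes (they flip from false to true), so they cannot play the role of the unchanged set directly; the argument must instead pin down a collection of $|\goal\setminus\init|$ fluents that stay put, or else invoke the identity $|\goal\cap\init|=|\goal|-|\goal\setminus\init|$ together with whatever assumption on $\task$ makes the two guaranteed-unchanged counts agree. Settling this reconciliation, rather than establishing the inclusions, is where I would concentrate the effort.
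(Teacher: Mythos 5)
Your lower-bound argument is correct and is essentially the paper's own, made precise: the decomposition $\init\triangle s=(\init\setminus s)\cup(s\setminus\init)$ with $s=\planapplication(\init,\plan)$, together with $\goal\setminus\init\subseteq s\setminus\init$, gives $\plandisruption(\plan)\ge|\goal\setminus\init|$ exactly as the paper argues informally. No issue there.

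The reconciliation you postpone for the upper bound cannot be carried out, and you are right to be suspicious: the stated bound $\mathcal{L}(\plandisruption(\plan))=|\fluents|-|\goal\setminus\init|$ is false in general. Take $\fluents=\{p\}$, $\init=\emptyset$, $\goal=\{p\}$, and a single action with $\addeffects=\{p\}$ and empty preconditions and deletes. The only solving plan reaches $s=\{p\}$, so $\plandisruption(\plan)=1$, while $\ell=|\goal\setminus\init|=1$ and $\mathcal{L}=|\fluents|-\ell=0<\ell$. The quantity that can legitimately be subtracted from $|\fluents|$ is the one you identified, $|\init\cap\goal|$: those fluents are true in $\init$ and, since $\goal\subseteq s$, also in $s$, hence never lie in $\init\triangle s$, giving $\plandisruption(\plan)\le|\fluents|-|\init\cap\goal|$ --- and this is the strongest conclusion obtainable from the containment $\goal\subseteq s$ alone. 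The fluents of $\goal\setminus\init$ are, as you note, forced \emph{into} the symmetric difference, so they cannot be the excluded set; $|\goal\setminus\init|$ and $|\init\cap\goal|$ coincide only when $|\goal|=2\,|\goal\setminus\init|$. The paper's own proof does not bridge this gap either: it observes that a plan could ``potentially alter every fluent'' (which yields the bound $|\fluents|$) and then subtracts $\ell$ with no supporting argument. So treat your derivation as the corrected statement rather than searching for a missing lemma that would recover the printed formula.
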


\begin{proof}
To establish the lower bound $\ell(\plandisruption(\plan))$, consider the initial state $\init$ and the goal state $\goal$. 
The lower bound is determined by the fluents that are in the goal state but not in the initial state, i.e., $|\goal \setminus \init|$. This represents the minimum number of changes required to achieve the goal state from the initial state, as these fluents must be made true by any plan that solves the task.

For the upper bound $\mathcal{L}(\plandisruption(\plan))$, any fluent in $\{\goal \cap \init\}$ is already true in $\init$ and must also be true in the final state $\planapplication(\init,\plan)$ of any plan $\plan$. Thus, such fluents cannot differ between $\init$ and $\planapplication(\init,\plan)$ and never contribute to disruption. At most, every other fluent can differ once relative to $\init$, and therefore $\mathcal{L}(\plandisruption(\plan)) = |\fluents| - |\goal \cap \init|$. 
This ensures that $\mathcal{L}(\plandisruption(\plan))$ is indeed an upper bound, as it represents the maximum number of changes possible from the initial state.
\end{proof}
\section{Computing Plans that Minimize Plan Disruption}

We present two different compilations that optimize plan disruption.
The compilations balance the trade-off between the accuracy of optimizing plan disruption and the complexity of the reformulated task.
In all cases, we will use $\omega \in \mathbb{R}^{+}_{0}$ to represent the weight assigned to the importance of plan disruption in the quality (cost) of a plan.
Next, we present these compilations, starting with the most accurate and complex, and progressing to the least accurate and simplest.

\subsection{Compilation 1: Lazy Plan Disruption}

The first compilation seeks to minimize Plan Disruption by examining, after the goals have been achieved, which fluents have changed their values from the initial to the goal state.
This compilation follows a similar structure to the soft goals compilation by Keyder and Geffner (\citeyear{keyder2009soft}) to solve oversubscription planning tasks~\cite{smith2004choosing}, which also inspired the compilation in \cite{klassen2022planning} to avoid side effects.
The plan is divided into two parts.
The first part focuses on achieving the hard goals, i.e., the original goals of the problem $\goal$. 
The second part addresses the soft goals by imposing penalties (increasing the cost of the plan) for not achieving them.
In this case the soft goals involve maintaining the truth values of the fluents in the goal state as they were in the initial state.

Given a planning task $\task$, we extend the set of fluents $\fluents$ as follows:
\begin{itemize}
    \item $\fluents_{I} = \bigcup_{f \in \init} \{ \mathsf{init}_f \}$, a set of propositions that mark whether a fluent $f \in \init$ was true in the initial state.

    \item $\fluents_{c} = \bigcup_{f \in \fluents} \{ \mathsf{checked}_f\}$, a set of propositions that mark whether a fluent $f \in \fluents$ has been checked or not. As seen later, this check will test the truth value of $f$ at the end of planning, increasing the total cost of the plan depending on its truth value in \init.

    \item $\mathsf{ga}$, a proposition representing that all goals in \goal have been achieved.

    \item $\mathsf{end}$, a proposition that represents the end of the planning episode. 
\end{itemize}

We update the actions in \actions to force that the original actions are only executed before the goals have been achieved. 
For each action $\action \in \actions$, we generate a modified version $\action^\prime$ that is stored in a new set $\actions^\prime$.
Each action $\action^\prime$ is defined as follows:
\begin{itemize}
    \item $\name(\action^\prime)=\name(\action)$
    \item $\precondition^+(\action^\prime)=\precondition^+(\action)$
    \item  $\precondition^-(\action^\prime)=\precondition^-(\action) \cup \{ \mathsf{ga} \}$
    \item $\addeffects(\action^\prime)=\addeffects(\action)$
    \item $\deleffects(\action^\prime)=\deleffects(\action)$
    \item $\cost(\action^\prime)=\cost(\action)$
\end{itemize}

Apart from updating the original actions, we also extend \actions with new actions that can only be executed after the goals have been achieved.
The first new action $\action^G$ sets $\mathsf{ga}$ to true once a goal state is reached:
\begin{itemize}
    \item $\name(\action^G) = \mathsf{goalstate}$
    \item $\precondition^+(\action^G)=\goal$
    \item $\precondition^-(\action^G)=\emptyset$
    \item $\addeffects(\action^G) = \{\mathsf{ga}\}$
    \item $\deleffects(\action^G) = \emptyset$
    \item $\cost(\action^G)=0$
\end{itemize}

The second set of new actions will increase the total cost of the plan depending on the number of changes between the initial and goal states.
In particular, we generate two actions for each fluent $f \in \fluents$: a $\mathsf{collect}^f$ action that does not increase the total cost and can be executed if and only if $f$ has the same truth value in both states; and a $\mathsf{forgo}^f$ action that increases the total cost by $\omega$ when the truth value of $f$ has changed.
We store these actions in a new set $\actions^c$.
Below we show the $\mathsf{forgo}$ and $\mathsf{collect}$ actions.
\begin{itemize}
    \item $\name(\action^{f})\!=\!\mathsf{forgo}^{f}$
    \item $\precondition^+(\action^f)\!=\!\{ \mathsf{ga}\}$
    \item $\precondition^-(\action^f)\!=\!\{ \mathsf{checked}_f,\!\mathsf{end} \}$
    \item $\addeffects(\action^f)\!=\!\{ \mathsf{checked}_f \} $ 
    \item $\deleffects(\action^f)\!=\!\emptyset$
    \item $\cost(\action^f)\!=\!\omega$
\end{itemize}
We only show the $\mathsf{collect}$ action for the positive case, i.e., when $\mathsf{init}_f$ is true. 
The other $\mathsf{collect}$ case is defined analogously by adding $f$ and $\mathsf{init}_f$ to $\precondition^-$.
\begin{itemize}
    \item $\name(\action^{f})\!=\!\mathsf{collect}^{f}$
    \item $\precondition^+(\action^f)\!=\!\{ f, \mathsf{init}_f, \mathsf{ga}\}$
    \item $\precondition^-(\action^f)\!=\!\{ \mathsf{checked}_f,\!\mathsf{end} \}$
    \item $\addeffects(\action^f)\!=\!\{ \mathsf{checked}_f \} $ 
    \item $\deleffects(\action^f)\!=\!\emptyset$
    \item $\cost(\action^f)\!=\!0$
\end{itemize}

Finally, we extend $\actions$ with a new action $\action^{\mathsf{end}}$ that makes $\mathsf{end}$ true when all the goals have been achieved and the truth value of all the fluents in $\fluents$ have been checked. 
\begin{itemize}
    \item $\name(\action^{\mathsf{end}}) = \mathsf{end}$
    \item $\precondition^+(\action^{\mathsf{end}})=\{\mathsf{ga}\} \cup \bigcup_{f \in \fluents}\{ \mathsf{checked}_f\}$
    \item $\precondition^-(\action^{\mathsf{end}})=\emptyset$
    \item $\addeffects(\action^{\mathsf{end}}) = \{\mathsf{end}\}$
    \item $\deleffects(\action^{\mathsf{end}}) = \emptyset$
    \item $\cost(\action^{\mathsf{end}})=0$
\end{itemize}


\begin{definition} [Lazy Plan Disruption Planning Task]
    Given a planning task $\stripstask$, a lazy plan disruption planning task $\lazytask^\omega$ that optimizes sum of action costs and plan disruption is a tuple $\lazytask^\omega = \langle \lazyfluents, \lazyactions, \lazyinit, \lazygoal \rangle$ where:
    \begin{itemize}
        \item $\lazyfluents = \fluents \cup \fluents_I \cup \fluents_c \cup \{\mathsf{ga}, \mathsf{end}\}$
        \item $\lazyactions = \actions^\prime \cup \actions^c \cup \{\action^G, \action^{\mathsf{end}}\}$
        \item $\lazyinit = \init \cup \fluents_I$
        \item $\lazygoal = \{ \mathsf{end}\}$
    \end{itemize}
\end{definition}

This compilation generates a number of actions that is given by the following formula:
\begin{equation*}
    |\lazyactions| = |\actions^\prime| + \overbrace{(2\times|\fluents|)}^{\actions^c} + \overbrace{2}^{\action^G,\action^{\mathsf{end}}}
\end{equation*}

The completeness and soundness of the lazy plan disruption planning task $\lazytask^\omega$ with respect to the original planning task $\task$ are straightforward derived from Keyder's work~\cite{keyder2009soft}.

\begin{proposition}\label{prop:lazy-plan-disruption}
    Given a planning task $\task$ and a plan $\plan$ that solves it, there exists a plan $\plan'$ mapped from $\plan$ that solves $\lazytask^1$, where the plan disruption of $\plan$, $\plandisruption(\plan)$, is equal to the plan disruption obtained by the lazy compilation, $\plandisruption_{\mathsf{L}}(\plan)$, and is given by:
    \[
    \plandisruption(\plan) = \cost(\plan') - \cost(\plan) = \plandisruption_{\mathsf{L}}(\plan)
    \]
\end{proposition}

\begin{proof}

Observe that $\plan'$ is mapped from $\plan$, i.e., $\plan'$ is obtained by appending to $\plan$ the $\action^{G}$ action followed by some permutation of the actions in $\actions^c$ and finally by the end action $\action^{\mathsf{end}}$. Since the cost of the appended actions is zero except for the $\mathsf{forgo}^{f}$ actions in $\actions^c$, then we have that $\cost(\plan') = \cost(\plan) + \sum_{\mathsf{forgo}^{f}\in \plan'} \cost(\mathsf{forgo}^{f})$. We now have to check that $\sum_{\mathsf{forgo}^{f}\in \plan'} \cost(\mathsf{forgo}^{f}) = \plandisruption(\plan)$.

A $\mathsf{forgo}^{f}$ action is executed if a fluent that was true in the initial state is no longer true in the goal state, and viceversa, increasing the total cost by $\omega = 1$. This behavior mirrors the plan disruption metric, which is defined as the symmetric difference between the initial and goal states.

Therefore, the total cost of the $\mathsf{forgo}^{f}$ actions with $\omega=1$, $\sum_{\mathsf{forgo}^{f} \in \actions^c} \cost(\mathsf{forgo}^{f})$, is equal to the plan disruption $\plandisruption(\plan)$, as it effectively counts the number of fluents that differ between the initial and goal states.
\end{proof}

\subsection{Compilation 2: Eager Plan Disruption}
The previous compilation is capable of accurately replicating the Plan Disruption metric.
However, it has a notable limitation: the quality of a plan, in terms of plan disruption, can only be assessed at the conclusion of the planning process, with no intermediate indicators.
This limitation may lead search algorithms to backtrack multiple times in their quest to identify optimal plans.
To address this issue, we propose a new compilation approach that prioritizes search efficiency over the precise accuracy of plan disruption. 
This is achieved by continuously monitoring the number of changes made to the initial state throughout the planning process. 
However, this monitoring will only compare the current state with \init, which might introduce inaccuracies in the plan disruption metric.
By doing so, we aim to enhance the search algorithm's efficiency, allowing it to more effectively navigate towards optimal plans.

Given a planning task \task, we update each original action $\action \in \actions$ by modifying its cost depending on (1) the number of add and delete effects of $\action$; and (2) the value of the added/deleted fluents in the initial state \init. 
We refer to the set of updated actions as $\actions^{\prime\prime}$, and formally define each action $\action^{\prime}\in\actions^{\prime\prime}$ as follows:
\begin{itemize}
    \item $\name(\action^\prime)=\name(\action)$
    \item $\precondition^+(\action^\prime)=\precondition^+(\action)$
    \item $ \precondition^-(\action^\prime)=\precondition^-(\action)$
    \item $\addeffects(\action^\prime)=\addeffects(\action)$
    \item $ \deleffects(\action^\prime)=\deleffects(\action)$
    \item $\cost(\action^\prime) = \cost(\action) + \omega(|(\addeffects(\action) \setminus \init) \cup (\deleffects(\action) \cap \init)|)$
\end{itemize}
As we can see, the only difference between $\action$ and $\action^\prime$ lies in the cost, which now incorporates an extra term weighted by $\omega$.
This extra term counts the number of propositions that are added by $\action$ and were not present in $\init$, plus the number of propositions that were present in $\init$ and are removed by $\action$.
Since this check is only done with respect to $\init$, it might not accurately measure plan disruption, as seen later.

\begin{definition}[Eager Plan Disruption Planning Task]
     Given a planning task $\stripstask$, an eager plan disruption planning task $\eagertask^\omega$ that optimizes sum of action costs and plan disruption is a tuple $\eagertask^\omega = \langle \fluents, \actions^{\prime\prime}, \init, \goal \rangle$.
\end{definition}

\begin{remark}\label{rem:eager-plan-disruption}
    Observe that the plan disruption of $\plan$ that solves $\task$ approximated by the eager compilation $\eagertask^\omega$ can be obtained as in Proposition~\ref{prop:lazy-plan-disruption}: there exists $\plan''$ mapped from $\plan$ that solves $\eagertask^1$ and the plan disruption of $\plan$ obtained by the eager compilation is given by:
    \[
    \plandisruption_{\mathsf{E}}(\plan) = \cost(\plan'') - \cost(\plan) >= \plandisruption(\plan)
    \]
\end{remark}

\begin{proposition}
    Given a planning task $\task$ and a plan $\plan$ that solves it, the plan disruption obtained by the lazy compilation $\plandisruption_{\mathsf{L}}(\plan)$, is always lower than or equal to the plan disruption obtained by the eager compilation $\plandisruption_{\mathsf{E}}(\plan)$.
\end{proposition}

\begin{proof}
    By Proposition~\ref{prop:lazy-plan-disruption} and Remark~\ref{rem:eager-plan-disruption}, we have that there exist $\plan'$ and $\plan''$ that solve $\lazytask^1$ and $\eagertask^1$ respectively, and the plan disruption obtained by $\lazytask^1$ is $\plandisruption_{\mathsf{L}}(\plan) = \cost(\plan') - \cost(\plan)$ and the one obtained by $\eagertask^1$ is $\plandisruption_{\mathsf{E}}(\plan) = \cost(\plan'') - \cost(\plan)$. Then we have to show that $\cost(\plan')\le\cost(\plan'')$. In particular, $\cost(\plan') = \cost(\plan) + \sum_{\mathsf{forgo}^{f}\in \plan'} \cost(\mathsf{forgo}^{f})$, and $\cost(\plan'') = \cost(\plan) + \sum_{\action\in\plan} |(\addeffects(\action) \setminus \init) \cup (\deleffects(\action) \cap \init)|$. We then have to show that

    \begin{small}
        \[\sum_{\mathsf{forgo}^{f}\in \plan'} \cost(\mathsf{forgo}^{f})\le \sum_{\action\in\plan} |(\addeffects(\action) \setminus \init) \cup (\deleffects(\action) \cap \init)|\]
    \end{small}

    Consider a fluent $f$ that was not true in \init and is added to the state by an action.
    If $f$ 
    is later removed by another action, the eager task $\eagertask^1$ still counts the addition of $f$ even if it is subsequently removed, since $\eagertask^1$ does not account for net changes. The lazy task $\lazytask^1$, however, does not count such transient changes if they do not result in a net effect on the goal state. This results in a lower or equal cost compared to the eager task, leading to the conclusion that the plan disruption for $\lazytask^1$ is always less than or equal to that for $\eagertask^1$.
\end{proof}

\section{Example}

Let us show how the two compilations approximate Plan Disruption by using the simple planning task depicted in Table~\ref{tab:running_example}.

\begin{table}[]
\small
    \centering
    \begin{tabular}{p{4.7cm}|p{2.5cm}}
       $\fluents~=~\{ A,B,C,D\},\init~=~\{ A, B\}$  & $\goal=\{ D \}$  \\ \hline
       $\precondition^+(a_1)~=~\{ A\} $  & $\precondition^+(a_2)~=~\{ C\} $ \\
       $\deleffects(a_1) = \{ A,B\} $  & $\deleffects(a_2) = \{ A\} $ \\
       $\addeffects(a_1) = \{ C\} $  & $\addeffects(a_2)~=~\{ D,B\} $ \\ 
       $\cost(a_1) = 10$  & $\cost(a_2) = 10$ \\\hline
       \multicolumn{2}{c}{$\plan = (a_1, a_2) \Rightarrow (s_0 = \{ A,B\}, s_1 = \{C\}, s_2 = \{ C,D,B\})$}   \\ \hline
    \end{tabular}
    \vspace{1.5mm}
    \caption{Planning task where the first row defines the fluents, as well as the initial and goal states; the second row defines the available actions $\actions$; and the third row shows a plan $\plan$ that solves the task along with the states it traverses.}
    \label{tab:running_example}
    \vspace{2mm}
\end{table}

The Plan Disruption of $\plan$ is $|\{ A, B \} \triangle \{C,D,B\}| = 3$, since $A$ is present in the initial but not in the goal state, and $C$ and $D$ are true in the goal but not in the initial state.
Let us see how both compilations $\lazytask^1$ and $\eagertask^1$ approximate Plan Disruption.

Consider the following plan, which optimally solves $\lazytask^1$:
\begin{small}
    \begin{multline*}
    \plan_{\mathsf{L}}=(a_1,a_2,\mathsf{goalstate},    \mathsf{forgo}^{A},\mathsf{collect}^{B},\mathsf{forgo}^{C},\mathsf{forgo}^{D}, \mathsf{end})
\end{multline*}
\end{small}
The first two actions, $a_1$ and $a_2$, contribute $20$ to the total cost ($10+10$).
From that moment, the remainder of the plan increases the total cost depending on the plan disruption metric induced by the execution of these two actions.
The next action, $\mathsf{goalstate}$ has a cost of $0$, the same as the final $\mathsf{end}$ action.
The rest of the actions in the plan check the truth values of all the fluents \fluents in the reached goal state.
$\mathsf{forgo}^{A}$ increases the total cost by $1$, as $A$ was removed.
$\mathsf{collect}^{B}$ does not increase the total cost, as $B$ remains true in the goal state.
And both $\mathsf{forgo}^{C}$ and $\mathsf{forgo}^{D}$ increase the total cost by $1$ since $C$ and $D$ appear in the goal but not in the initial state.
Therefore, following Proposition~\ref{prop:lazy-plan-disruption}, the part of the total cost that belongs to the plan disruption metric is $3$, which equals the actual value of the metric.

On the other hand consider the following plan, which optimally solves $\eagertask^1$:
\begin{equation*}
    \plan_{\mathsf{E}} = (a_1,a_2)
\end{equation*}
The cost of $a_1$ is $10$ (the cost of the original action), plus $3$: $2$ fluents removed from \init ($A$ and $B$), and $1$ fluent added ($C$).
The cost of $a_2$ is $10$ (the cost of the original action), plus $2$: $1$ fluent removed from \init ($A$), and $1$ fluent added ($D$), since $B$ was already true in \init.
Therefore, according to Remark~\ref{rem:eager-plan-disruption}, the part of the total cost that belongs to the plan disruption metric is $5$, 
which is $2$ units higher than the actual plan disruption metric.
This difference comes from the fact that $A$ and $B$ are wrongly counted twice.
In the case of $A$ both actions delete it, but only the first time should be considered.
In the case of $B$, it is first deleted by $a_1$ and then added by $a_2$, so it should not be counted as it is true both in the initial and goal state.

In summary, while $\lazytask^\omega$ can accurately assess the disruption of a plan, it requires a polynomial increase in the number of fluents and actions.
On the other hand, $\eagertask^\omega$ may not be as accurate in certain situations, but it offers the advantage of not needing to introduce new actions, only adjusting the cost of the original ones.
\section{Evaluation}

\subsection{Experimental Setting}
\paragraph{Benchmark.} We selected all the \textsc{strips} tasks from the optimal suite of the Fast Downward~\cite{helmert2006fast} benchmark collection\footnote{https://github.com/aibasel/downward-benchmarks}.
This gives us $1847$ original tasks $\task$ divided across $66$ domains.

\paragraph{Approaches.} We evaluate the two compilations, namely Lazy Plan Disruption $\lazytask^\omega$ and Eager Plan Disruption $\eagertask^\omega$ on the above benchmark.

We experiment with $\omega=\{10^{-3},1,10^3\}$ to weight differently the importance of plan disruption wrt. sum of action costs.
With $\omega=10^{-3}$ we can expect plan disruption to just serve as a tie breaker, with sum of action costs being the main driver for solution's quality.
With $\omega=1$ we can expect plan disruption to have a similar weight than sum of action costs in the quality of a solution.
Finally, with $\omega=10^3$ we can expect plan disruption to drive the optimization, with the sum of action costs only serving as tie breaker.
We will compare the compilations to a baseline, which consists on solving the original task $\task$, where optimal plans are only defined by the sum of action costs.

\paragraph{Reproducibility.} We solve all the planning tasks ($\lazytask^\omega, \eagertask^\omega$ and $\task$) using the \textsc{seq-opt-lmcut} configuration of Fast Downward, which runs $A^*$ with the admissible \textsc{lmcut} heuristic to compute an optimal plan. 
Experiments were run on an Intel Xeon E5-2666 v3 CPU @ 2.90GHz x 8 processors with a 8GB memory bound and a time limit of 1800s.

\subsection{Results}

\paragraph{Coverage and Execution Time Overhead. }

\begin{table}[]
\setlength{\tabcolsep}{0.9pt}
\renewcommand{\arraystretch}{0.92}
    \centering
    \small
    \begin{tabular}{l|r|r|r|r|r|r|r}
         Domain (\# Problems) & $\task$ & $\eagertask^{10^{-3}}$ & $\eagertask^{1}$ & $\eagertask^{10^{3}}$ & $\lazytask^{10^{-3}}$ & $\lazytask^{1}$ & $\lazytask^{10^{3}}$ \\ \hline
        agricola-opt18 (20) & 0 & 0 & 0 & 0 & 0 & 0 & 0\\
airport (50) & 28 & 23 & 24 & 23 & 0 & 0 & 0\\
barman-opt11 (20) & 4 & 4 & 4 & 4 & 0 & 0 & 0\\
barman-opt14 (14) & 0 & 0 & 0 & 0 & 0 & 0 & 0\\
blocks (35) & 28 & 28 & 28 & 30 & 3 & 3 & 3\\
childsnack-opt14 (20) & 0 & 0 & 0 & 0 & 0 & 0 & 0\\
data-net-opt18 (20) & 12 & 12 & 13 & 19 & 0 & 0 & 0\\
depot (22) & 7 & 7 & 7 & 7 & 0 & 0 & 0\\
driverlog (20) & 13 & 14 & 14 & 13 & 1 & 1 & 1\\
elevators-opt08 (30) & 22 & 18 & 19 & 18 & 0 & 0 & 0\\
elevators-opt11 (20) & 18 & 15 & 16 & 15 & 0 & 0 & 0\\
floortile-opt11 (20) & 7 & 6 & 6 & 5 & 0 & 0 & 0\\
floortile-opt14 (20) & 6 & 5 & 5 & 2 & 0 & 0 & 0\\
freecell (80) & 15 & 15 & 15 & 11 & 0 & 0 & 0\\
ged-opt14 (20) & 15 & 13 & 20 & 20 & 0 & 0 & 0\\
grid (5) & 2 & 2 & 2 & 2 & 0 & 0 & 0\\
gripper (20) & 7 & 7 & 7 & 7 & 5 & 4 & 4\\
hiking-opt14 (20) & 9 & 9 & 9 & 10 & 3 & 0 & 0\\
logistics00 (28) & 20 & 20 & 20 & 20 & 0 & 0 & 0\\
logistics98 (35) & 6 & 6 & 6 & 6 & 0 & 0 & 0\\
miconic (150) & 141 & 141 & 141 & 141 & 36 & 35 & 35\\
movie (30) & 30 & 30 & 30 & 30 & 30 & 30 & 30\\
mprime (35) & 22 & 22 & 24 & 22 & 0 & 0 & 0\\
mystery (30) & 17 & 16 & 17 & 16 & 0 & 0 & 0\\
nomystery-opt11 (20) & 14 & 14 & 14 & 14 & 0 & 0 & 0\\
openstacks-opt08 (30) & 21 & 14 & 8 & 7 & 9 & 6 & 6\\
openstacks-opt11 (20) & 16 & 9 & 3 & 2 & 4 & 1 & 1\\
openstacks-opt14 (20) & 3 & 1 & 0 & 0 & 0 & 0 & 0\\
openstacks (30) & 7 & 7 & 7 & 7 & 5 & 5 & 5\\
org-syn-opt18 (20) & 7 & 7 & 7 & 7 & 0 & 0 & 0\\
org-syn-split-opt18 (20) & 15 & 14 & 15 & 15 & 0 & 0 & 0\\
parcprinter-08 (30) & 18 & 21 & 19 & 19 & 3 & 3 & 3\\
parcprinter-opt11 (20) & 13 & 16 & 14 & 14 & 0 & 0 & 0\\
parking-opt11 (20) & 2 & 1 & 1 & 1 & 0 & 0 & 0\\
parking-opt14 (20) & 3 & 1 & 0 & 0 & 0 & 0 & 0\\
pathways (30) & 5 & 5 & 5 & 5 & 0 & 0 & 0\\
pegsol-08 (30) & 28 & 27 & 26 & 26 & 0 & 0 & 0\\
pegsol-opt11 (20) & 18 & 17 & 16 & 16 & 0 & 0 & 0\\
petri-net-opt18 (20) & 9 & 8 & 11 & 11 & 0 & 0 & 0\\
pipes-notank (50) & 17 & 14 & 17 & 13 & 0 & 0 & 0\\
pipes-tank (50) & 12 & 8 & 9 & 8 & 0 & 0 & 0\\
psr-small (50) & 49 & 49 & 49 & 49 & 14 & 13 & 13\\
quantum-opt23 (20) & 11 & 11 & 11 & 11 & 0 & 0 & 0\\
rovers (40) & 7 & 8 & 9 & 10 & 2 & 1 & 1\\
satellite (36) & 7 & 8 & 11 & 11 & 1 & 1 & 1\\
scanalyzer-08 (30) & 15 & 9 & 9 & 7 & 3 & 3 & 3\\
scanalyzer-opt11 (20) & 12 & 6 & 6 & 4 & 1 & 1 & 1\\
snake-opt18 (20) & 6 & 4 & 6 & 4 & 0 & 0 & 0\\
sokoban-opt08 (30) & 29 & 27 & 25 & 21 & 0 & 0 & 0\\
sokoban-opt11 (20) & 20 & 20 & 19 & 17 & 0 & 0 & 0\\
spider-opt18 (20) & 11 & 7 & 6 & 6 & 0 & 0 & 0\\
storage (30) & 15 & 15 & 15 & 15 & 1 & 1 & 1\\
termes-opt18 (20) & 5 & 4 & 5 & 5 & 0 & 0 & 0\\
tetris-opt14 (17) & 6 & 3 & 4 & 3 & 0 & 0 & 0\\
tidybot-opt11 (20) & 14 & 13 & 12 & 10 & 0 & 0 & 0\\
tidybot-opt14 (20) & 8 & 7 & 7 & 5 & 0 & 0 & 0\\
tpp (30) & 6 & 6 & 6 & 6 & 3 & 3 & 3\\
transport-opt08 (30) & 11 & 11 & 11 & 11 & 3 & 3 & 3\\
transport-opt11 (20) & 6 & 6 & 6 & 6 & 0 & 0 & 0\\
transport-opt14 (20) & 6 & 6 & 6 & 6 & 0 & 0 & 0\\
trucks (30) & 10 & 10 & 10 & 10 & 0 & 0 & 0\\
visitall-opt11 (20) & 10 & 11 & 12 & 10 & 7 & 6 & 6\\
visitall-opt14 (20) & 5 & 6 & 6 & 5 & 1 & 0 & 0\\
woodworking-opt08 (30) & 17 & 17 & 18 & 20 & 1 & 1 & 1\\
woodworking-opt11 (20) & 12 & 11 & 12 & 14 & 0 & 0 & 0\\
zenotravel (20) & 13 & 12 & 13 & 11 & 2 & 2 & 2\\ \hline
Total (1847) & 948  & 884  & 893  & 863  & 138  & 123  & 123 

    \end{tabular}
    \caption{Number of problems solved by each compilation.}
    \label{tab:coverage}
\end{table}

First, we aim to understand the difficulty of solving our compiled tasks compared to the original task. 
Table~\ref{tab:coverage} presents the coverage of \textsc{seq-opt-lmcut} when solving both the compiled and original tasks across all domains and problems. 
As expected, solving the standard planning task, where only cost optimization is considered, is easier, allowing the planner to find cost-optimal plans for 948 instances. 
The eager compilation $\eagertask^\omega$, which merely updates the cost of the original actions, solves nearly the same number of tasks, with 893 instances when $\omega=1$. 
In contrast, the lazy compilation $\lazytask^\omega$, which requires introducing additional actions and transforming the original problem into an oversubscription planning task, solves only 138 tasks. 
This clearly indicates that the lazy compilation results in tasks that are more complex than both the eager compilation and the original tasks.

In order to further understand the overhead introduced by our compilations, we compare the execution time $T$ needed by the planner to solve the new tasks $X$ versus the time needed to solve the standard planning task $\task$.
We refer to this as the \emph{time overhead factor}, and formally define it as $\frac{T(X)}{T(\task)}$.
This execution time includes both the time needed to translate (and ground) the task and the solving time.
To make the comparison fair, we only consider the $123$ problems that are commonly solved by all the approaches.
Figure~\ref{fig:execution_time_overhead} shows this analysis as a set of violinplots, which represent the distribution of these factors in log scale for each compilation.
As we can see, the eager compilations hardly introduce any overhead compared to solving the original task.
Most of the tasks can be solved in the same amount of time, with around $50$ tasks that are faster to solve under the new cost's setting.
On the other hand, solving the lazy tasks can require execution times that are one to five orders of magnitude longer.

\begin{figure}
    \centering
    \includegraphics[width=1\columnwidth]{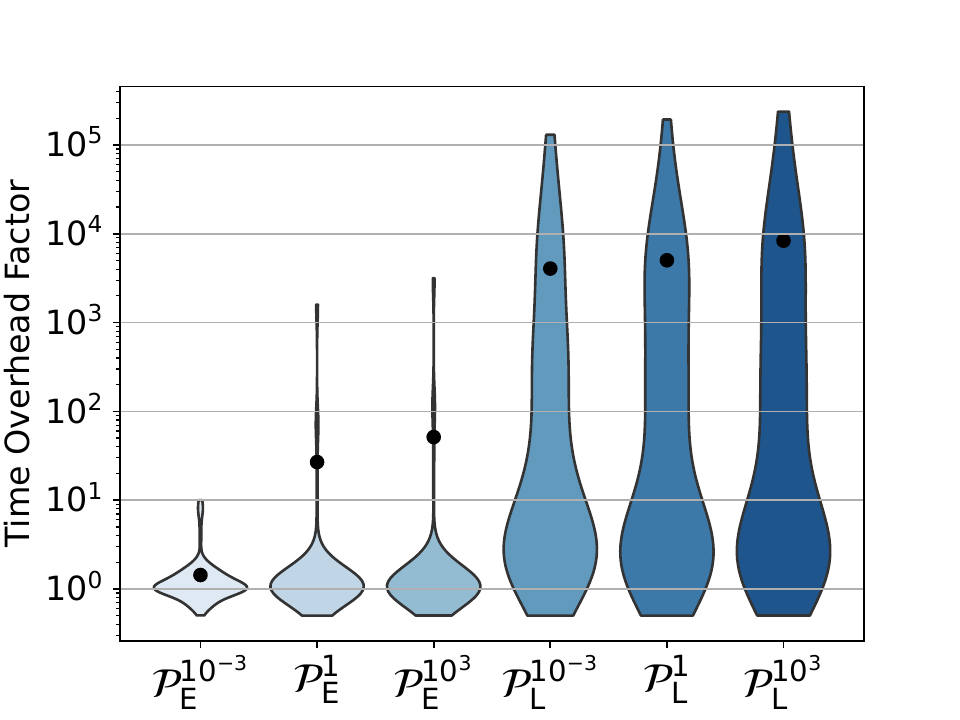}
    \caption{Distribution of the execution time overhead factor $\frac{T(X)}{T(\task)}$ for each compilation $X$. Black dots represent the average.}
    \label{fig:execution_time_overhead}
\end{figure}

\paragraph{Is Solving $\task$ Good Enough?}

\begin{figure*}
    \centering
    \begin{subfigure}[h]{0.33\textwidth}\centering
    \includegraphics[width=\textwidth]{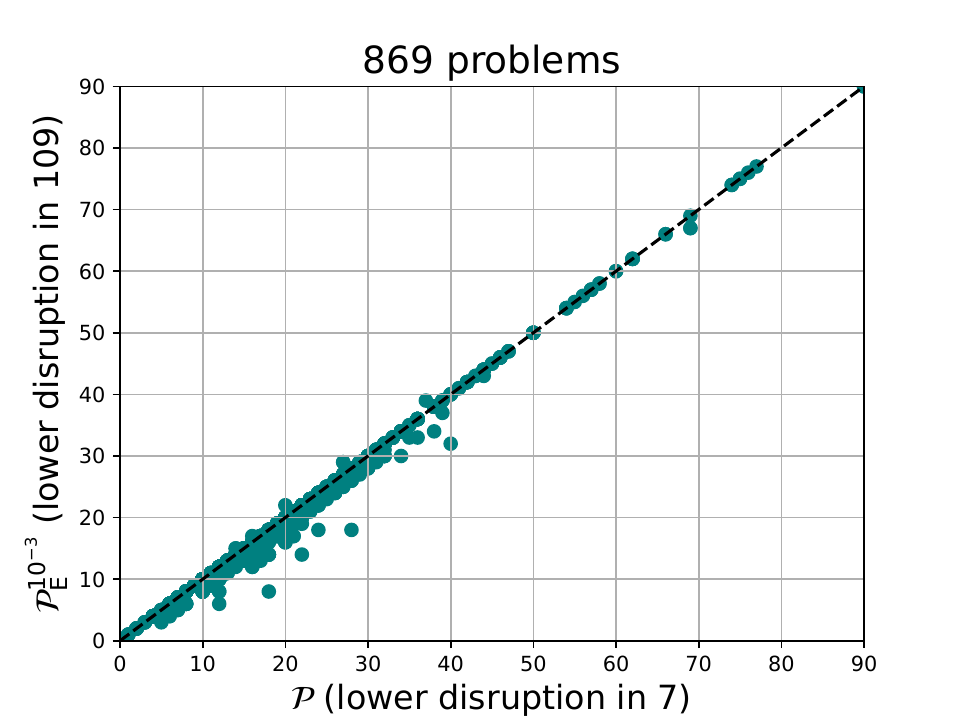}
    \caption{$\eagertask^{10^{-3}}$}
    \label{fig:a}
    \end{subfigure}
    \hfill
    \begin{subfigure}[h]{0.33\textwidth}\centering
    \includegraphics[width=\textwidth]{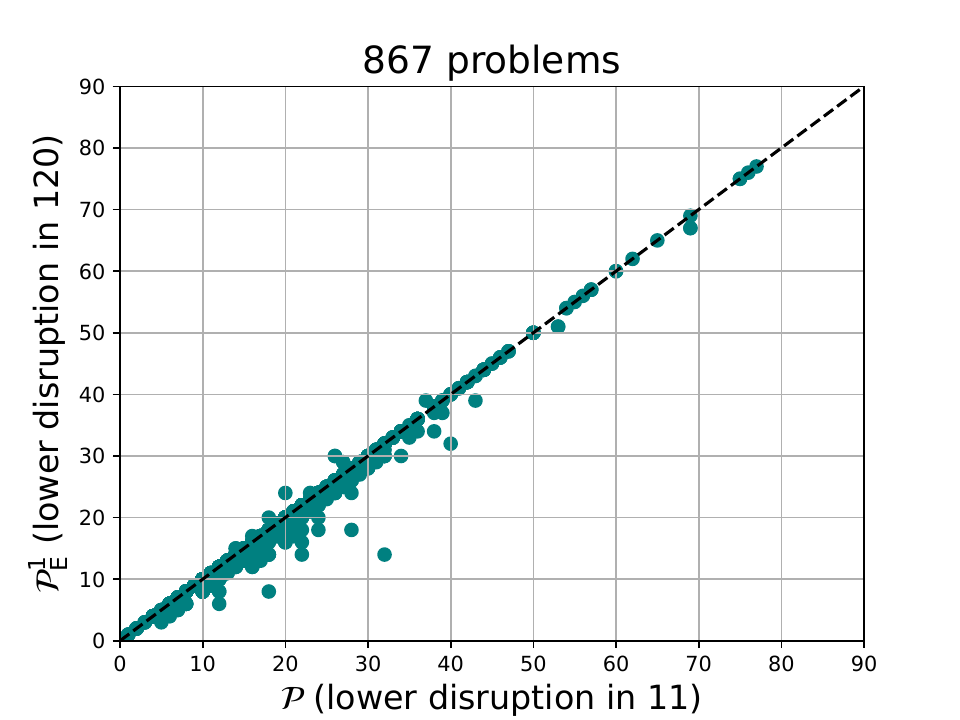}
    \caption{$\eagertask^{1}$}
    \label{fig:b}
    \end{subfigure}
    \hfill
    \begin{subfigure}[h]{0.33\textwidth}\centering
    \includegraphics[width=\textwidth]{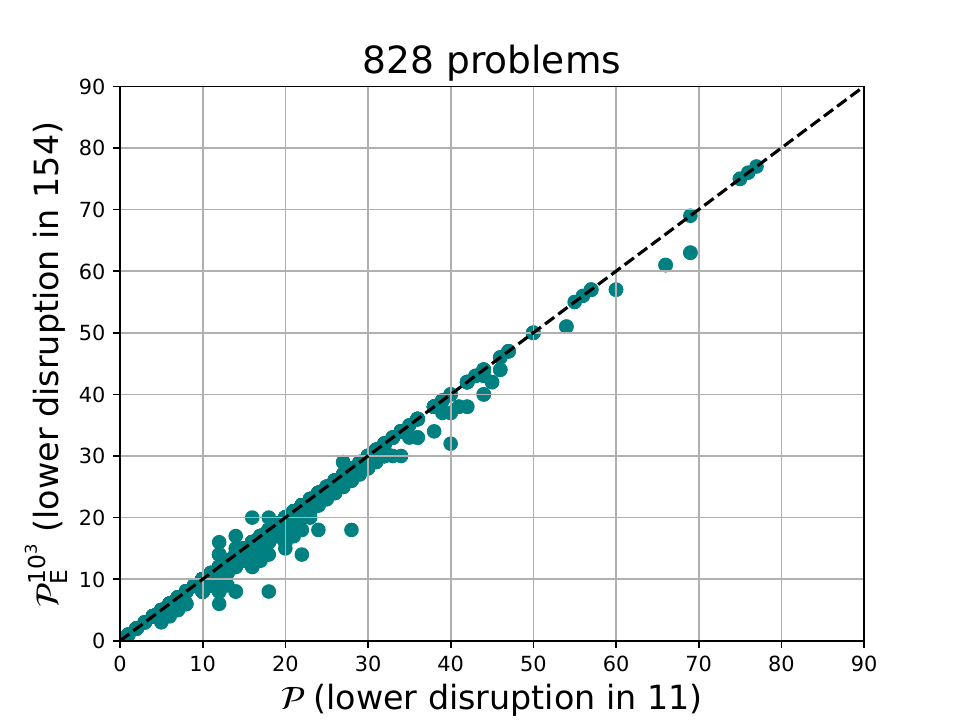}
    \caption{$\eagertask^{10^{3}}$}
    \label{fig:c}
    \end{subfigure}
    \bigskip

    \vspace{-5mm}
    
    \begin{subfigure}[t]{0.33\textwidth}\centering
    \includegraphics[width=\textwidth]{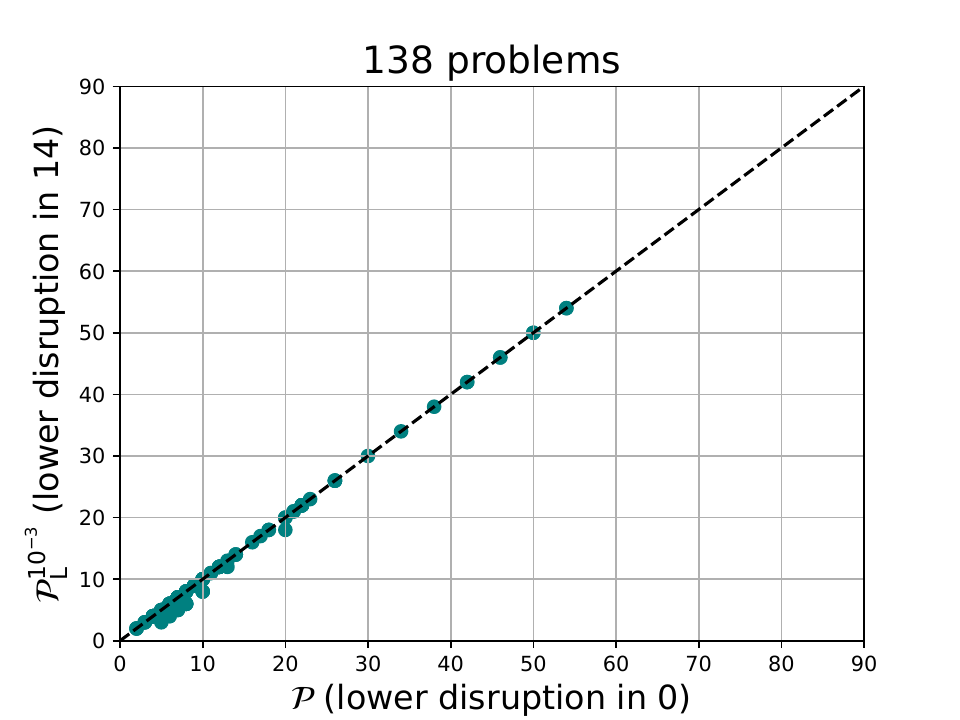}
    \caption{$\lazytask^{10^{-3}}$ }
    \label{fig:a_time}
    \end{subfigure}
    \hfill
    \begin{subfigure}[t]{0.33\textwidth}\centering
    \includegraphics[width=\textwidth]{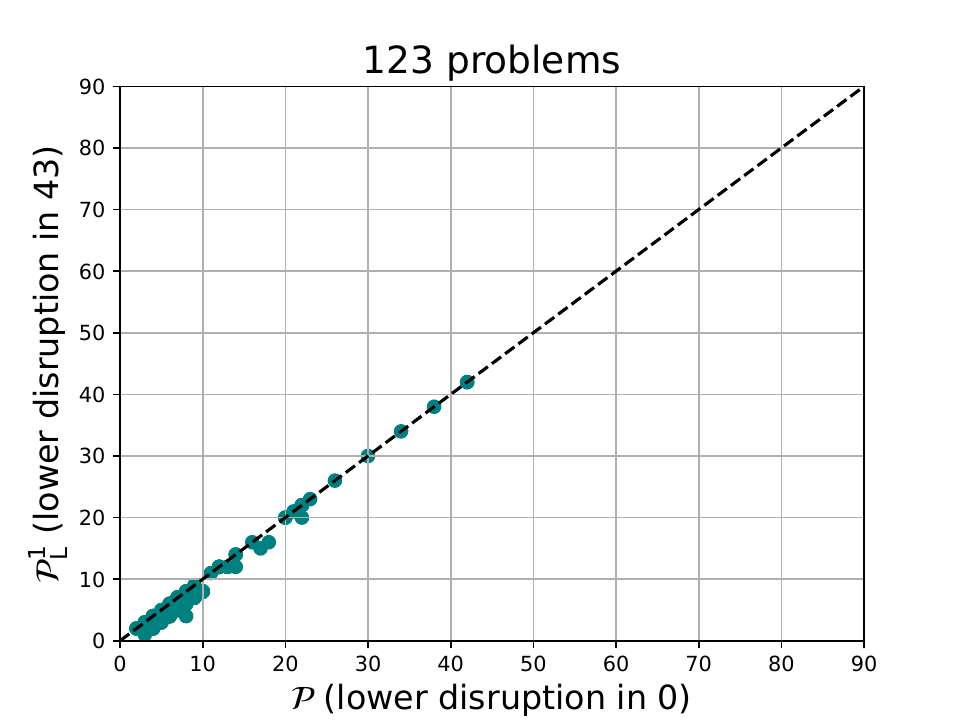}
    \caption{$\lazytask^{1}$ }
    \label{fig:b_time}
    \end{subfigure}
    \hfill
    \begin{subfigure}[t]{0.33\textwidth}\centering
    \includegraphics[width=\textwidth]{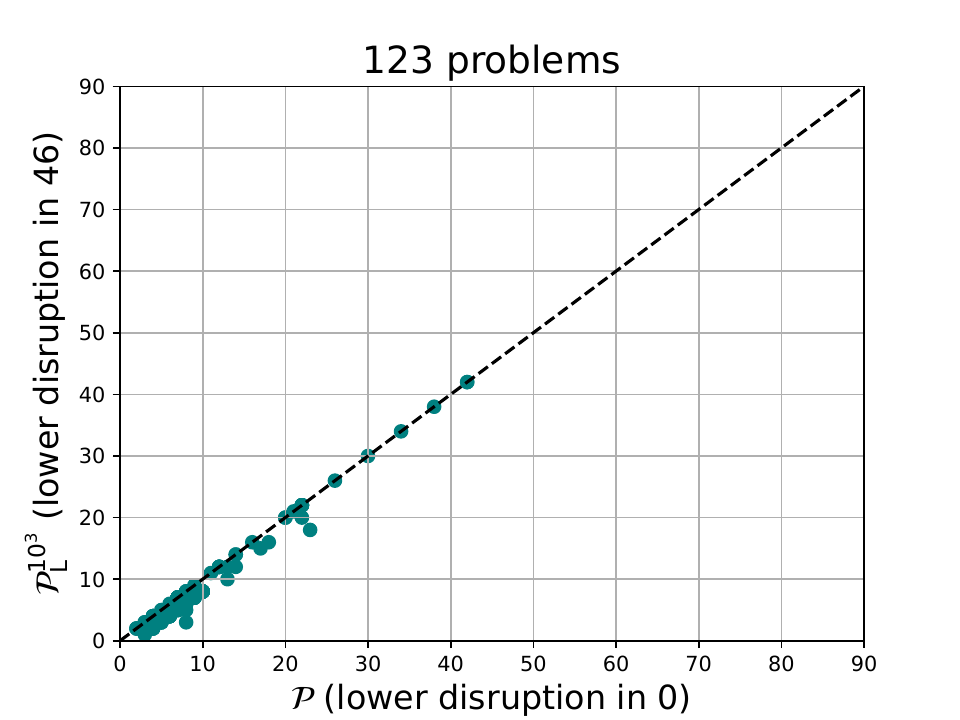}
    \caption{$\lazytask^{10^{3}}$ }
    \label{fig:c_time}
    \end{subfigure}
    \bigskip
    \vspace{-5mm}
    \caption{Plan disruption for optimally solved compiled tasks ($y$-axis) versus original tasks ($x$-axis). Figure titles show the number of problems plotted, i.e., those solved by both the compilation and the original task.} 
    \label{fig:big_figure}
\end{figure*}

Given the coverage and execution time results, one might question whether solving the standard planning task $\task$ already produces plans with minimal disruption.
To challenge this hypothesis, we compare the disruption metrics of the plans obtained from solving each compilation with those of the plans derived from solving the original task.
These results are shown in Figure~\ref{fig:big_figure}.
The title of each subfigure indicates the number of problems represented in the plot, i.e., those commonly solved by the given compilation ($y$-axis) and the original task ($x$-axis).
Points below the diagonal indicate that the plan returned after solving the compilation has a lower (better) plan disruption value.

As we can see, most points lie along the diagonal regardless of the compilation, indicating that the cost-optimal plan has the same disruption value as the compilation that explicitly optimizes this metric. 
In both the eager and lazy compilations, higher $\omega$ values result in more plans with better disruption values compared to those produced by the original task. 
In the eager compilation $\eagertask^\omega$, this difference increases from $109/869=12.5\%$ of the plans with $\omega=10^{-3}$ to $154/828=18.5\%$ with $\omega=10^3$. 
A similar trend is seen in the lazy compilation $\lazytask^\omega$, where the percentage rises from $10\%$ to $37\%$. 
As discussed in Section 5, the eager compilation, although serving as a proxy for plan disruption, may introduce noise in its computation. 
In some cases, this can lead to plans that inaccurately assess the correct disruption metric, resulting in the cost-optimal plan having lower disruption than the one derived from solving the eager compilation. 
Conversely, the lazy compilation, which accurately computes the plan disruption, never returns a plan with higher disruption than when solving the original task.

\paragraph{Trading-off Plan Disruption and Cost.}

The previous results suggest that there is not much variability in the plan disruption of the plans that solve the planning tasks in the benchmark.
Although this is the general trend, there are some tasks where we can observe a trade-off between optimizing plan cost and disruption.
This is exemplified in the \textsc{satellite} task depicted in Figure~\ref{fig:tradeoff}, which shows the cost ($y$-axis) and disruption ($x$-axis) of the plans returned by each compilation.
As we can see, the plan we get after solving the original task $\task$ (blue dot) is cost-optimal ($9$) but has the highest disruption value, making $8$ changes to the initial state in order to achieve the goal.
All the compilations manage to get plans with lower disruption, with some of them achieving the same cost as to when solving $\task$.
On the other extreme of the spectrum we have the lazy compilation with $\omega=10^3$, which is able to return a plan with a plan disruption of $3$ at the expense of increasing the cost of the plan from $9$ to $12$.
This result clearly demonstrates that when the original task offers sufficient diversity in the plans that solve it, our proposed compilations can effectively balance plan cost and disruption, yielding plans that prioritize each objective differently.

\begin{figure}
    \centering
    \includegraphics[width=1\columnwidth]{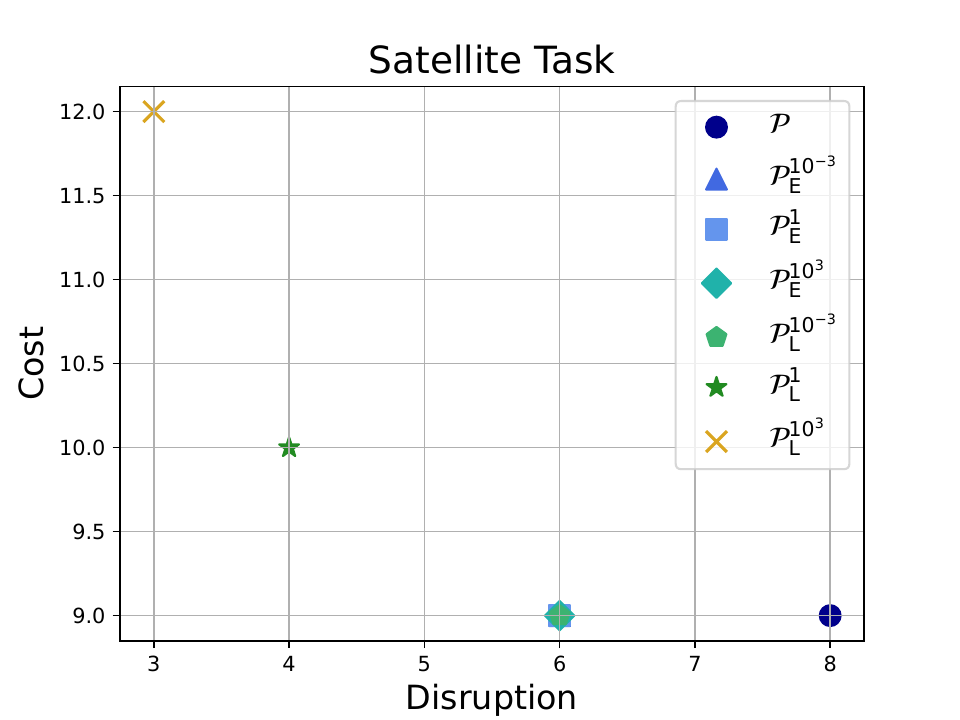}
    \caption{Plan Cost ($y$-axis) vs Plan Disruption ($x$-axis) of the plans that solve a \textsc{satellite} task under the different compilations.}
    \label{fig:tradeoff}
\end{figure}

\section{Conclusions and Future Work}

In this paper, we introduce an objective that may be relevant to many planning applications: finding plans that minimally alter the initial state to achieve the goals. 
We term this concept \emph{plan disruption} and propose two compilations that jointly optimize plan cost and disruption. 
Experimental results from a comprehensive benchmark indicate that, although most planning tasks exhibit limited variability in plan disruption, our compilations effectively balance both objectives in tasks where there is potential for improving plan disruption. 
The eager compilation scales similarly to the standard planning task and effectively minimizes plan disruption. 
However, because it minimizes a proxy for plan disruption rather than the actual metric, it can occasionally produce plans with higher disruption values than those obtained from solving the original task. 
Conversely, the lazy compilation generally requires several orders of magnitude more execution time, making it unsuitable for larger planning tasks. 
Despite this, the additional time investment results in plans with the lowest possible plan dispersion, making it the preferred compilation for smaller planning tasks.

In this work we solely focused on computing optimal solutions for all the tasks, revealing that some of our compilations face scalability challenges.
In future work we would like to solve the reformulated tasks using satisficing planners to study the trade-off between scalability and suboptimality.
In this paper, we treated all the atoms in the initial state equally.
We would also like to allow users to specify the importance of keeping the truth value of each atom, either by explicitly assigning weights or implicitly by providing input plans that reflect their preferences~\cite{morales2024learning}.

\section*{Disclaimer}
This paper was prepared for informational purposes by the Artificial Intelligence Research group of JPMorgan Chase \& Co. and its affiliates ("JP Morgan'') and is not a product of the Research Department of JP Morgan. JP Morgan makes no representation and warranty whatsoever and disclaims all liability, for the completeness, accuracy or reliability of the information contained herein. This document is not intended as investment research or investment advice, or a recommendation, offer or solicitation for the purchase or sale of any security, financial instrument, financial product or service, or to be used in any way for evaluating the merits of participating in any transaction, and shall not constitute a solicitation under any jurisdiction or to any person, if such solicitation under such jurisdiction or to such person would be unlawful.

\bibliography{aaai2026}

@article{mutze2014scheduling,
  title={Scheduling with few changes},
  author={M{\"u}tze, Torsten},
  journal={European Journal of Operational Research},
  volume={236},
  number={1},
  pages={37--50},
  year={2014},
  publisher={Elsevier}
}

@article{sakkout2000probe,
  title={Probe backtrack search for minimal perturbation in dynamic scheduling},
  author={Sakkout, Hani El and Wallace, Mark},
  journal={Constraints},
  volume={5},
  pages={359--388},
  year={2000},
  publisher={Springer}
}

@article{clark2011nurse,
  title={Nurse rescheduling with shift preferences and minimal disruption},
  author={Clark, Alistair R and Walker, Hannah},
  journal={Journal of Applied Operational Research},
  volume={3},
  number={3},
  pages={148--162},
  year={2011}
}

@book{DBLP:books/daglib/0014222,
  author    = {Malik Ghallab and
               Dana S. Nau and
               Paolo Traverso},
  title     = {Automated planning - theory and practice},
  publisher = {Elsevier},
  year      = {2004},
  isbn      = {978-1-55860-856-6},
  timestamp = {Thu, 14 Apr 2011 14:43:22 +0200},
  biburl    = {https://dblp.org/rec/books/daglib/0014222.bib},
  bibsource = {dblp computer science bibliography, https://dblp.org}
}

@inproceedings{sobrinho2001algebra,
  title={Algebra and algorithms for QoS path computation and hop-by-hop routing in the Internet},
  author={Sobrinho, Jo{\~a}o L},
  booktitle={Proceedings IEEE INFOCOM 2001. Conference on Computer Communications. Twentieth Annual Joint Conference of the IEEE Computer and Communications Society (Cat. No. 01CH37213)},
  volume={2},
  pages={727--735},
  year={2001},
  organization={IEEE}
}

@inproceedings{geisser2022admissible,
  title={Admissible Heuristics for Multi-Objective Planning},
  author={Gei{\ss}er, Florian and Haslum, Patrik and Thi{\'e}baux, Sylvie and Trevizan, Felipe},
  booktitle={Proceedings of the International Conference on Automated Planning and Scheduling},
  volume={32},
  pages={100--109},
  year={2022}
}

@inproceedings{salzman2023heuristic,
  title={Heuristic-Search Approaches for the Multi-Objective Shortest-Path Problem: Progress and Research Opportunities},
  author={Salzman, Oren and Felner, Ariel and Hernandez, Carlos and Zhang, Han and Chan, Shao Hung and Koenig, Sven},
  booktitle={32nd International Joint Conference on Artificial Intelligence, IJCAI 2023},
  pages={6759--6768},
  year={2023},
  organization={International Joint Conferences on Artificial Intelligence}
}

@article{medard2007airline,
  title={Airline crew scheduling from planning to operations},
  author={Medard, Claude P and Sawhney, Nidhi},
  journal={European Journal of Operational Research},
  volume={183},
  number={3},
  pages={1013--1027},
  year={2007},
  publisher={Elsevier}
}

@article{zhu2005disruption,
  title={Disruption management for resource-constrained project scheduling},
  author={Zhu, Guidong and Bard, Jonathan F and Yu, Gang},
  journal={Journal of the Operational Research Society},
  volume={56},
  number={4},
  pages={365--381},
  year={2005},
  publisher={Taylor \& Francis}
}

@inproceedings{van2005plan,
  title={Plan Repair as an Extension of Planning.},
  author={Van Der Krogt, Roman and De Weerdt, Mathijs},
  booktitle={ICAPS},
  volume={5},
  pages={161--170},
  year={2005}
}

@inproceedings{fox2006plan,
  title={Plan Stability: Replanning versus Plan Repair.},
  author={Fox, Maria and Gerevini, Alfonso and Long, Derek and Serina, Ivan and others},
  booktitle={ICAPS},
  volume={6},
  pages={212--221},
  year={2006}
}

@inproceedings{pozanco2024computing,
  title={Computing Planning Centroids and Minimum Covering States Using Symbolic Bidirectional Search},
  author={Pozanco, Alberto and Torralba, {\'A}lvaro and Borrajo, Daniel},
  booktitle={Proceedings of the International Conference on Automated Planning and Scheduling},
  volume={34},
  pages={455--463},
  year={2024}
}

@inproceedings{edelkamp2005cost,
  title={Cost-algebraic heuristic search},
  author={Edelkamp, Stefan and Jabbar, Shahid and Lluch-Lafuente, Alberto},
  booktitle={AAAI},
  volume={5},
  pages={1362--1367},
  year={2005}
}

@inproceedings{katz2022producing,
  title={On Producing Shortest Cost-Optimal Plans},
  author={Katz, Michael and R{\"o}ger, Gabriele and Helmert, Malte},
  booktitle={Proceedings of the International Symposium on Combinatorial Search},
  volume={15},
  number={1},
  pages={100--108},
  year={2022}
}

@inproceedings{ulloa2020simple,
  title={A simple and fast bi-objective search algorithm},
  author={Ulloa, Carlos Hern{\'a}ndez and Yeoh, William and Baier, Jorge A and Zhang, Han and Suazo, Luis and Koenig, Sven},
  booktitle={Proceedings of the International Conference on Automated Planning and Scheduling},
  volume={30},
  pages={143--151},
  year={2020}
}

@article{hernandez2023simple,
  title={Simple and efficient bi-objective search algorithms via fast dominance checks},
  author={Hern{\'a}ndez, Carlos and Yeoh, William and Baier, Jorge A and Zhang, Han and Suazo, Luis and Koenig, Sven and Salzman, Oren},
  journal={Artificial Intelligence},
  volume={314},
  pages={103807},
  year={2023},
  publisher={Elsevier}
}

@article{10.1145/1754399.1754400,
author = {Mandow, Lawrence and De La Cruz, Jos\'{e}. Luis P\'{e}rez},
title = {Multiobjective {A}* Search with Consistent Heuristics},
year = {2008},
issue_date = {June 2010},
publisher = {Association for Computing Machinery},
address = {New York, NY, USA},
volume = {57},
number = {5},
issn = {0004-5411},
url = {https://doi.org/10.1145/1754399.1754400},
doi = {10.1145/1754399.1754400},
abstract = {The article describes and analyzes NAMOA*, an algorithm for multiobjective heuristic graph search problems. The algorithm is presented as an extension of A*, an admissible scalar shortest path algorithm. Under consistent heuristics A* is known to improve its efficiency with more informed heuristics, and to be optimal over the class of admissible algorithms in terms of the set of expanded nodes and the number of node expansions. Equivalent beneficial properties are shown to prevail in the new algorithm.},
journal = {J. ACM},
month = {jun},
articleno = {27},
numpages = {25},
keywords = {optimality, Consistent heuristics, multiobjective search}
}

@article{helmert2006fast,
  title={The fast downward planning system},
  author={Helmert, Malte},
  journal={Journal of Artificial Intelligence Research},
  volume={26},
  pages={191--246},
  year={2006}
}

@article{keyder2009soft,
  title={Soft goals can be compiled away},
  author={Keyder, Emil and Geffner, Hector},
  journal={Journal of Artificial Intelligence Research},
  volume={36},
  pages={547--556},
  year={2009}
}

@inproceedings{pozanco2024uniform,
  title={On Computing Plans with Uniform Action Costs},
  author={Pozanco, Alberto and Borrajo, Daniel and Veloso, Manuela},
  booktitle={Workshop on Human-Aware Explainable Planning at ICAPS},
  year={2024}
}

@article{morales2024learning,
  title={On learning action costs from input plans},
  author={Morales, Marianela and Pozanco, Alberto and Canonaco, Giuseppe and Gopalakrishnan, Sriram and Borrajo, Daniel and Veloso, Manuela},
  journal={arXiv preprint arXiv:2408.10889},
  year={2024}
}

@inproceedings{smith2004choosing,
  title={Choosing Objectives in Over-Subscription Planning.},
  author={Smith, David E},
  booktitle={ICAPS},
  volume={4},
  pages={393},
  year={2004}
}

@inproceedings{klassen2022planning,
  title={Planning to avoid side effects},
  author={Klassen, Toryn Q and McIlraith, Sheila A and Muise, Christian and Xu, Jarvis},
  booktitle={Proceedings of the AAAI Conference on Artificial Intelligence},
  volume={36},
  number={9},
  pages={9830--9839},
  year={2022}
}

@article{amodei2016concrete,
  title={Concrete problems in AI safety},
  author={Amodei, Dario and Olah, Chris and Steinhardt, Jacob and Christiano, Paul and Schulman, John and Man{\'e}, Dan},
  journal={arXiv preprint arXiv:1606.06565},
  year={2016}
}

\end{document}